\newtheorem{theorem}{Theorem}
\newcommand{\etal}{\textit{et al}.}
\newcommand{\ie}{\textit{i}.\textit{e}.}
\newcommand{\eg}{\textit{e}.\textit{g}.}
\DeclareMathOperator*{\argmax}{argmax}
\icmltitlerunning{Adversarially Robust Frame Sampling with Bounded Irregularities}
\begin{document}

\twocolumn[
\icmltitle{Adversarially Robust Frame Sampling with Bounded Irregularities}



\icmlsetsymbol{equal}{*}

\begin{icmlauthorlist}
\icmlauthor{Hanhan Li}{google}
\icmlauthor{Pin Wang}{berkeley}
\end{icmlauthorlist}

\icmlaffiliation{google}{Google AI, 1600 Amphitheatre Parkway, Mountain View, CA 94043}
\icmlaffiliation{berkeley}{California PATH, UC Berkeley, Richmond, CA, 94804}

\icmlcorrespondingauthor{Hanhan Li}{uniqueness@google.com}

\icmlkeywords{Computer Vision}

\vskip 0.3in
]



\printAffiliationsAndNotice{}  

\begin{abstract}
In recent years, video analysis tools for automatically extracting meaningful information from videos are widely studied and deployed. Because most of them use deep neural networks which are computationally expensive, feeding only a subset of video frames into such algorithms is desired. Sampling the frames with fixed rate is always attractive for its simplicity, representativeness, and interpretability. For example, a popular cloud video API generated video and shot labels by processing only the first frame of every second in a video. However, one can easily attack such strategies by placing chosen frames at the sampled locations. In this paper, we present an elegant solution to this sampling problem that is provably robust against adversarial attacks and introduces bounded irregularities as well.
\end{abstract}

\section{Introduction}
The amount of videos we produce each year is growing at an incredible pace. The videos may originate from various kinds of devices such as professional video recorders, personal cameras, surveillance cameras, smartphones, \etal, or they may be created completely from software. Internet companies host an enormous amount of videos for people to watch online. Video analysis algorithms, ranging from handcrafted ones such as shot boundary detection \cite{boreczky1996comparison,pal2015video} to machine learned ones such as convolutional neural networks \cite{krizhevsky2012imagenet,he2016deep}, also advanced significantly. Machine learning algorithms are especially attractive these days as they are capable of performing difficult tasks such as extracting semantic meaning from raw media. Companies are increasingly reliant on these video understanding tools to better filter, index, and rank videos for search and recommendation at scale, and these topics are widely studied in both academia and industry. 

However, most of the video analysis algorithms are computationally expensive. For example, it requires $7.6\times 10^9$ Mul-Add FLOPs to apply a ResNet101 model on a single frame with a $224\times224$ resolution \cite{he2016deep}. One may achieve better performance with a larger ResNet model or with a higher frame resolution, where even more FLOPs are required. Therefore, analyzing all frames can be unaffordable or cost-inefficient, and sampling a subset of frames beforehand is usually desired.

Generally, we prefer to sample as uniformly as possible for the following three reasons. First, a uniform sequence better represents the whole video. Second, it is better interpretable and explainable. Third, irregular sequences may add extra complexity to or degrade the performance of downstream algorithms. There is a well known cloud service that takes a video file from a user and returns the video labels (objects within the video), shot changes (scene changes within the video), shot labels (description of video events over time), and more. The original service processed only the first frame of every second of a video to save computing budgets. However, if images were inserted at the rate of one frame per second into a video, the API would only output video and shot labels related to the inserted images only and ignore the rest which is the vast majority. This vulnerability was discovered and demonstrated by Hosseini \etal \cite{hosseini2017attacking}.

One needs to introduce randomness into the sampling algorithm as a countermeasure to these image insertion attacks. However, it will necessarily compromise uniformity, and we would like to keep the disturbance as small as possible. This paper provides such a solution, named `jittering with reflection', that is provably robust and has bounded irregularities. As frame timestamps of a video can be treated as either continuous or discrete, we will address both variants in this paper. To our best knowledge, there is no prior work that jointly optimizes both uniformity and adversarial robustness of frame sampling.

Please note that there is an orthogonal problem on pixel-level robustness associated with deep neural networks \cite{szegedy2013intriguing} that this paper does not address. The problem there is that certain imperceptible perturbation to the images can trick the network into making completely wrong predictions.

The rest of the paper is organized as follows. In Section \ref{sec:continuous}, we present the theoretical formulation and solution to the continuous version of the sampling robustness problem. Its discrete counterpart is addressed in Section \ref{sec:discrete}. Section \ref{sec:example} demonstrates an example sampling with its associated video classification performance, and Section \ref{sec:conclusions} concludes this paper.

\section{The continuous version}
\label{sec:continuous}
In this section, we will first define mathematically the desired uniformity properties and randomness properties as well as the rationale behind the definitions. We will then prove that these properties are sufficient to ensure security, \ie, robustness against insertion attacks. Finally, we will propose a sampling algorithm and prove that it satisfies all the desired properties. Time is treated as a continuous quantity as it is in the physical world.

\subsection{Uniformity and randomness properties}
For a given interval $t\in \mathbb{R}^+$ and a perturbation threshold $t_p\in (0, t)$, we would like to probabilistically sample an infinite sequence $A = \{a_0, a_1, a_2, \dotsc \}$ in $\mathbb{R}^+$. We require the following uniformity properties:
\begin{itemize}
    \item[U1.] $|a_{i+1} - a_i -t | \leq t_p$ for any $i \geq 0$.
    \item[U2.] There exists some offset $o \in \mathbb{R}$ such that $|a_i - (it + o)| \leq t/2$ for any $i \geq 0$.
\end{itemize}
We also require the following randomness properties:
\begin{itemize}
    \item[R1.] There exists some threshold $\delta > 0$ such that $\delta < \mathrm{p}(r \in A) < \infty$ for any $r \in \mathbb{R}^+$.
    \item[R2.] The event $r + q \in A$ becomes independent of all events $s \in A$ for $s < r$ as $q \to \infty$. Formally, for any $\epsilon > 0$, there exists a $q$ such that $|\mathrm{p}(r + q \in A) - \mathrm{p}(r + q \in A | W \cap A = W')| < \epsilon$ for any $r > 0$, $W \subset [0, r)$, and $W' \subset W$.
\end{itemize}
where $\mathrm{p}(r \in A)$ is the probability density of the event $r \in A$. Intuitively, the probability of $[r, r+\mathrm{d}r)$ intersecting with $A$ is $\mathrm{p}(r \in A)\mathrm{d}r$.

In our application, the sequence $A$ represents the desired timestamps with which we would like to take samples with a target frequency $1/t$. The threshold $t_p$ is usually chosen to be much smaller than $t$.

The uniformity properties ensure that the sampled frames are well-represented and interpretable for a normal video, as they are evenly spaced up to a bounded error. Property U1 is about incremental uniformity, and it ensures that the time intervals between two neighboring frames are reasonably close to $t$. This is particularly important for algorithms that infer motion or depth from the difference between the contents of these two frames \cite{dosovitskiy2015flownet,gordon2019depth}. Property U2 is about cumulative uniformity, and it ensures that there is no long time deviation from a fixed frame rate. This is particularly important if we need to align the sampled frames with a stream of features from another modality (\eg audio) that may have a constant frequency.

The randomness properties ensure that the sampled frames are robust against adversarially positioned frames. Property R1 ensures that there is no blind spot that we never take sample from. Property R2 ensures that there is no long time correlation that could be exploited. For example, if a uniform sequence is shifted by an overall random interval, R1 could be satisfied, but R2 could not. In this case, an attacker at most needs to try a couple of times to find an offset to his/her favor and trick the sampling strategy.

The following subsection proves that any set of frames that persistently appear throughout a video will have a probability exponentially close to $1$ of getting caught in a sampled sequence with the above properties.

\subsection{Security statement and proof}
\begin{theorem}
Let $S$ be a subset of $\mathbb{R}^+$ with a measure of $\infty$.  Let $A = \{a_0, a_1, a_2, \dotsc \}$ be a sampled sequence in $\mathbb{R}^+$ that satisfies the uniformity and randomness properties. Then the probability of $S \cap [0, u)$ and $A$ being disjoint goes to $0$ exponentially with the measure of $S \cap [0, u)$ as $u$ goes to $\infty$.
\end{theorem}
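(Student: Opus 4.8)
The plan is to show that the avoidance probability decays like $C_0 e^{-\gamma m}$, where $m = |S \cap [0,u)|$, by cutting $\mathbb{R}^+$ into widely separated blocks, arguing that each block independently captures a point of $A$ with probability proportional to the amount of $S$ it contains, and then multiplying these almost-independent events. The only consequence of the uniformity properties I would use is U1: it forces $a_{i+1} - a_i \ge t - t_p > 0$, so $A$ is strictly increasing and any interval $J$ meets $A$ in at most $|J|/(t-t_p) + 1$ points. (Property U2 plays no role here.)

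Next I would fix $\epsilon = \delta/2$ and let $q$ be the decorrelation length that R2 supplies for this $\epsilon$; set $L = 2q$ and, for a phase $\phi \in [0,2L)$, define the content blocks $C_k(\phi) = [\phi + 2kL + q,\ \phi + (2k+1)L)$ for $k \ge 0$. These are pairwise disjoint, each has length $q$, and each is preceded by a gap of width $\ge q$. The key per-block estimate: condition on the past $\mathcal{F}_k := \sigma\!\left(A \cap [0,\phi+2kL)\right)$. For any $r \in C_k(\phi)$ we have $r - q \ge \phi + 2kL$, so $[0,\phi+2kL) \subseteq [0, r-q)$ and R2 (read through conditional densities) gives that the conditional density of $A$ at $r$ exceeds $\mathrm{p}(r\in A) - \epsilon > \delta/2$ almost surely, using R1. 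Integrating over $r \in S \cap C_k(\phi)$ yields $\mathbb{E}\!\left[\,|A \cap S \cap C_k(\phi)|\ \middle|\ \mathcal{F}_k\right] > \tfrac{\delta}{2}\,|S \cap C_k(\phi)|$, while U1 caps this count deterministically by $M := q/(t-t_p) + 1$. Since the count is a nonnegative integer, $\mathbb{E}[X\mid\mathcal F_k] \le M\,\mathrm{p}(X\ge 1\mid\mathcal F_k)$, hence $\mathrm{p}\!\left(A \cap S \cap C_k(\phi) = \emptyset \mid \mathcal{F}_k\right) \le 1 - \tfrac{\delta}{2M}\,|S \cap C_k(\phi)|$ a.s.

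Then I would assemble the blocks. The event $\{A \cap (S\cap[0,u)) = \emptyset\}$ is contained in $\bigcap_k \{A \cap S \cap C_k(\phi) = \emptyset\}$ over those $k$ with $C_k(\phi)\subseteq[0,u)$; the $k$-th indicator is $\mathcal{F}_{k+1}$-measurable and $(\mathcal{F}_k)$ is a filtration, so peeling the blocks off one at a time by the tower property gives $\mathrm{p}\!\left(A \cap (S\cap[0,u)) = \emptyset\right) \le \prod_k \bigl(1 - \tfrac{\delta}{2M}|S \cap C_k(\phi)|\bigr) \le \exp\!\bigl(-\tfrac{\delta}{2M}\sum_k |S \cap C_k(\phi)|\bigr)$. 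It remains to pick $\phi$ so that $\sum_k |S \cap C_k(\phi)|$ is comparable to $m$: averaging over $\phi \in [0,2L)$, each point of $[0,u)$ outside an $O(L)$-sized boundary lies in some $C_k(\phi)$ for a $\phi$-set of measure $q$, so $\int_0^{2L}\sum_k |S\cap C_k(\phi)|\,d\phi \ge q\,(m - O(L))$ and some $\phi^*$ attains $\sum_k |S \cap C_k(\phi^*)| \ge \tfrac12(m - O(L))$. This gives $\mathrm{p}\!\left(A \cap (S\cap[0,u)) = \emptyset\right) \le C_0 e^{-\gamma m}$ with $\gamma = \delta/(4M) > 0$ and $C_0$ a fixed constant; since $|S\cap[0,u)| \to \infty$ as $u\to\infty$ (as $S$ has infinite measure), the probability tends to $0$ exponentially in $|S\cap[0,u)|$.

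The main obstacle is the adversarial placement of $S$: a fixed, oblivious block decomposition can be completely missed if $S$ hides inside the inter-block buffers, which is why the averaging-over-phase step (equivalently, a uniformly random phase) is essential rather than cosmetic. A secondary technical wrinkle is that R2 is phrased for probability densities and for conditioning on positive-measure — hence probability-zero — events; making the per-block estimate fully rigorous requires restating R2 via regular conditional densities and a Fubini step to move the conditional expectation inside the integral over $S\cap C_k(\phi)$, but this introduces no new idea.
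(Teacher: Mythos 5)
Your proof is correct and follows the same overall strategy as the paper's --- carve $[0,u)$ into blocks separated by at least the R2 decorrelation length, lower-bound each block's conditional capture probability by a constant times $\mu(S\cap\text{block})$, chain the conditional bounds into a product, and choose the decomposition adaptively so the retained blocks hold a constant fraction of $\mu(S\cap[0,u))$ --- but the two key steps are executed differently. For the per-block estimate the paper aligns its blocks with the sampling lattice (length-$t$ windows centered at lattice points) and invokes U2 to guarantee exactly one $a_m$ per window, so the events $\{x\in A\}$ inside a window are disjoint and the capture probability \emph{equals} $\int \mathrm{p}(x\in A)\,\mathrm{d}x$; you instead use only U1 to cap the number of sample points in a length-$q$ block by $M$ and convert the first-moment bound into a hit probability via $\mathrm{P}(X\ge 1)\ge \mathbb{E}[X]/M$. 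Your version costs a factor of $M$ in the exponent but frees the blocks from the lattice and shows that U2 is not actually needed for the security statement. For the adaptive choice, the paper pigeonholes over $n$ interleaved residue classes of windows (losing a factor $n$), while you average over a continuous phase (losing a constant plus an $O(L)$ boundary term); these are interchangeable. Two small points: your averaging constant should be $1/4$ rather than $1/2$ (a point of $[0,u)$ lies in a content block for a $\phi$-set of measure $q$ out of $2L=4q$), which only shrinks $\gamma$; and conditioning on the full $\sigma$-algebra $\mathcal{F}_k$ technically exceeds the event-based form of R2 --- you flag this yourself, and it can be avoided entirely by conditioning, as the paper does, only on the events $\{S\cap C_{k'}(\phi)\cap A=\emptyset,\ k'<k\}$, which are literally of the form $W\cap A=W'$ with $W'=\emptyset$.
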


\begin{proof}
Let $\delta$ be the threshold in Property R1. By Property R2, there exists an integer $n$ such that
\begin{align*}
    \big|&\mathrm{p}\big(r + (n-1)t \in A\big) \\
    &- \mathrm{p}\big(r + (n-1)t \in A | W \cap A = W'\big)\big| < \delta / 2 
\end{align*}
for any $r > 0$, $W \subset [0, r)$, and $W' \subset W$. It follows that
\begin{equation}\label{probabilitythreshold}
\mathrm{p}\big(r + (n-1)t \in A | W \cap A = W'\big) > \delta / 2
\end{equation}
for any $r$, $W$, and $W'$ under the same condition.

We pick an offset $o'$ such that $o' - t / 2\leq 0$ and that $o-o'$ is an integer multiple of $t$. We partition $[o' - t/2, \infty)$ into the following $n$ sets, $I_0$, $I_1$, ... $I_{n-1}$, where
\begin{equation*}
    I_i = \bigcup_{k=0}^{\infty}I_{i,k}
\end{equation*}
with the intervals defined as 
\begin{equation*}
  I_{i,k}=\big[(i+kn-1/2)t + o', (i+kn+1/2)t + o'\big)
\end{equation*}

Let us denote $I_{i,k}^{u} = [0, u)\cap I_{i,k}$ and $I_i^{u} = [0, u)\cap I_i$. Then $I_0^u$, $I_1^u$, ... $I_{n-1}^u$ is a partition of $[0, u)$.

It's clear that
\begin{equation} \label{probabilitythresholdcondition}
    |x-x'|>(n-1)t
\end{equation}
for any $x \in I_{i,k}^{u}$ and  $x' \in I_{i,k'}^{u}$ if $k \ne k'$.

Let $l(u)=\lceil (u-o'+t/2)/(nt)\rceil$, it's clear that $I_{i,k}^{u}=\emptyset$ if $k>l(u)$.

Let $\mu$ be the measure function on $R$. We define
\begin{equation*}
   j_u=\argmax_{i \in \{0, 1, ...n-1\}}\mu\big(S\cap I_i^u\big)
\end{equation*}
It's clear that 
\begin{equation*}
    \mu\big(S\cap I_{j_u}^u\big) \geq \mu\big(S\cap[0, u)\big)/n
\end{equation*}

By Property U2, with a probability of $1$, there is exactly one $a_m$ in the interval $I_{j_u,k}^u$, and hence the events $x \in A$ for all $x$ in $I_{j_u,k}^u$ are disjoint.

As a result, by Property R1,
\begin{align*}
    & \mathrm{P}(S \cap I_{j_u,0}^u \cap A = \emptyset) \\
    = & 1 - \int_{S \cap I_{j_u,0}^u} \mathrm{p}(x \in A) \mathrm{d}x \\
    \leq & 1 - \delta \mu(S \cap I_{j_u,0}^u)
\end{align*}

In addition, by Equation \ref{probabilitythreshold} and Equation \ref{probabilitythresholdcondition}, for any $k \geq 1$,
\begin{align*}
    &\mathrm{P}\big(S \cap I_{j_u,k}^u \cap A = \emptyset \\
       & \quad \big| S \cap I_{j_u,k'}^u \cap A = \emptyset \text{ for } k' \in \{0, 1,...k-1\}\big) \\
    =& 1 - \int_{ S \cap I_{j_u,k}^u} \\
         & \mathrm{p}\big(x \in A \big| S \cap I_{j_u,k'}^u \cap A = \emptyset \text{ for } k' \in \{0, 1,...k-1\}\big) \mathrm{d}x \\
    \leq & 1 - \frac{1}{2}\delta \mu(S \cap I_{j_u,k}^u)
\end{align*}

Therefore,
\begin{align*}
         &\mathrm{P}(S \cap[0,u) \cap A = \emptyset) \\
    \leq & \mathrm{P}(S \cap I_{j_u,k}^u \cap A = \emptyset \text{ for } k \ \in \{0,1,...l(u)\}) \\
    =&\mathrm{P}(S \cap I_{j_u,0}^u \cap A = \emptyset) \cdot\\
     &\prod_{k=1}^{l(u)}\mathrm{P}\big(S \cap I_{j_u,k}^u \cap A = \emptyset \\
       & \quad\quad\quad \big| S \cap I_{j_u,k'}^u \cap A = \emptyset \text{ for } k' \in \{0, 1,...k-1\}\big) \\
    \leq& \big(1 - \delta \mu(S \cap I_{j_u,0}^u)\big)  \prod_{k=1}^{l(u)} \big( 1 - \frac{1}{2}\delta \mu(S \cap I_{j_u,k}^u) \big) \\
    \leq &\exp\big(-\frac{1}{2}\delta\sum_{k=0}^{l(u)}\mu(S \cap I_{j_u,k}^u)\big)  \\
    = & \exp\big(-\frac{1}{2}\delta \mu(S \cap I_{j_u}^u)\big) \\
    \leq & \exp\big(-\frac{\delta \mu\big(S\cap[0, u)\big)}{2n}\big)
\end{align*}

Therefore,  the probability $\mathrm{P}(S \cap[0,u) \cap A = \emptyset)$ goes to $0$ as $u$ goes to $\infty$, and the rate of convergence is exponential with $\mu\big(S\cap[0, u)\big)$.
\end{proof}

\subsection{Jittering with reflection}
Our proposed sampling strategy ‘Jittering with Reflection’ satisfies all desired properties. Moreover, it has an elegant behavior that $\mathrm{p}(r \in A)$ is constantly
$1 / t$ , which maximally ensures robustness. The key of the algorithm is mirror reflections against the boundaries of the intervals $[i  t, (i + 1)t]$. Figure \ref{fig:frame_sampling} shows examples of the sampled frames.

\begin{figure}[h]
\begin{center}
\includegraphics[width=0.9\linewidth]{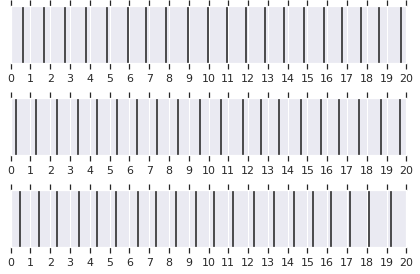}
\end{center}
   \caption{This figure shows sampled frames from Algorithm \ref{alg:continuous} with 3 different seeds. We used $t=1$ and $V = \mathrm{Unif}([-0.1, 0.1])$. Locations of the vertical lines represent times of the sampled frames.}
\label{fig:frame_sampling}
\end{figure}

\begin{algorithm}[h]
\caption{Jittering with reflection (continuous version)}
\begin{algorithmic}
\label{alg:continuous}
\REQUIRE A jittering distribution $V$ on $[-t_p, t_p]$ which is symmetric about $0$. It has a piecewise smooth p.d.f.. For example $V = \mathrm{Unif}([-t_p, t_p])$.
\ENSURE A sampled sequence $\{a_0, a_1, a_2, \dotsc\}$.
\STATE Sample $a_0 \sim \mathrm{Unif}([0, t])$.
\FOR{$i = 1, 2, ... $}
\STATE Sample $v_i \sim V$, set $b_i = a_{i - 1} + t + v_i$.
\IF{$b_i > (i + 1)t$}
\STATE Set $a_i = 2(i + 1)t - b_i$.
\ELSIF{$b_i < it$}
\STATE Set $a_i = 2it - b_i$.
\ELSE
\STATE Set $a_i = b_i$.
\ENDIF
\ENDFOR
\end{algorithmic}
\end{algorithm}

\begin{theorem}
The sampled sequence from Algorithm \ref{alg:continuous} satisfies the uniformity and randomness properties.
\end{theorem}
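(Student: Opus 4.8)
The plan is to control the sampled points relative to the lattice $\{it : i \ge 0\}$. First I would prove by induction on $i$ that $a_i \in [it,(i+1)t]$: the base case is the definition of $a_0$, and in the inductive step $b_i = a_{i-1}+t+v_i$ lies in $[it-t_p,(i+1)t+t_p]$, so since $t_p<t$ the single reflection performed by Algorithm~\ref{alg:continuous} maps it into $[it,(i+1)t]$. Property~U2 is then immediate with $o=t/2$. For U1 I would use that $a_i+t\in[(i+1)t,(i+2)t]$ together with the fact that the reflection step maps $b_{i+1}=a_i+t+v_{i+1}$ to a point $a_{i+1}$ of that interval satisfying $|a_{i+1}-z|\le|b_{i+1}-z|$ for every $z$ in the interval; taking $z=a_i+t$ gives $|a_{i+1}-(a_i+t)|\le|v_{i+1}|\le t_p$, which is U1.

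Next I would recast the randomness properties in terms of $Y_i := a_i-it\in[0,t]$. Since each $v_i$ is drawn freshly and independently, $\{Y_i\}_{i\ge0}$ is a time-homogeneous Markov chain on $[0,t]$; writing out the direct term together with the two possible reflections and using that $f_V$ is even, its one-step transition density (with $f_V$ extended by $0$ off $[-t_p,t_p]$) is
\[
p(y',y)=f_V(y-y')+f_V(y+y')\,\mathbf{1}[y+y'\le t_p]+f_V(2t-y-y')\,\mathbf{1}[y+y'\ge 2t-t_p].
\]
This expression is symmetric in $(y,y')$ and bounded (a piecewise-smooth density on a compact interval is bounded), so the chain is reversible with respect to the uniform law $U$ on $[0,t]$, and in particular $U$ is stationary. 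Since $Y_0\sim U$, induction gives $Y_i\sim U$ for all $i$; feeding this into linearity of expectation --- $\mathrm{p}(r\in A)\,\mathrm{d}r=\sum_i\mathrm{P}(a_i\in[r,r+\mathrm{d}r))$, where for a.e.\ $r$ exactly one term is nonzero and equals $1/t$ --- yields $\mathrm{p}(r\in A)=1/t$, which is both the claimed constant value and R1.

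For R2 the crux is a Doeblin-type minorization for $\{Y_i\}$: there exist $K\in\mathbb{N}$ and $\beta>0$ such that $P^K(y',\cdot)\ge\beta\,U$ for every $y'\in[0,t]$. Granting this, a standard argument (using also the uniform bound on $p$ to pass from total-variation mixing to pointwise control of the transition density) gives $\sup_{y',y}|p^k(y',y)-1/t|\to 0$ geometrically as $k\to\infty$. Now fix $r,W,W'$ with $W\subset[0,r)$ and put $M=\lfloor r/t\rfloor$; the event $\{W\cap A=W'\}$ depends only on the frames $a_0,\dots,a_M$, hence only on $(Y_0,\dots,Y_M)$. Writing $N=\lfloor(r+q)/t\rfloor$, the value $r+q$ can coincide only with $a_N$, so $\mathrm{p}(r+q\in A\mid W\cap A=W')$ is the conditional density of $a_N$ at $r+q$, which by the Markov property equals $\mathbb{E}\big[\,p^{N-M}(Y_M,(r+q)-Nt)\mid W\cap A=W'\,\big]$ and hence differs from $1/t=\mathrm{p}(r+q\in A)$ by at most $\sup_{y',y}|p^{N-M}(y',y)-1/t|$. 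Since $N-M\ge\lfloor q/t\rfloor$ for every $r$, taking $q$ large makes this bound uniformly small over all $r,W,W'$, establishing R2.

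The step I expect to be the main obstacle is the minorization. Only piecewise smoothness of $f_V$ is assumed, so $f_V$ may vanish near $0$ or on whole subintervals and no single step need have a density bounded below anywhere; one has to show instead that the $k$-fold ``folded convolution'' of $V$ eventually covers all of $[0,t]$ with a uniform positive lower bound, tracking how the reachable set and the density on it grow with $k$ and checking that near the endpoints $0$ and $t$ the reflection terms of $p$ supply the missing mass. The uniformity in the starting point $y'$ --- which is exactly what propagates to uniformity in $r,W,W'$ --- is what makes this more than a routine irreducibility observation.
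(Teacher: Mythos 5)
Your treatment of U1, U2 and R1 is correct and, where it differs from the paper, arguably cleaner. The paper also reduces to the chain $Y_i=a_i-it$ on $[0,t]$ and shows $Y_i\sim\mathrm{Unif}([0,t])$, but it does so by a case-by-case computation of the density on the three sub-ranges $[0,t_p)$, $[t_p,t-t_p]$, $(t-t_p,t]$ using the symmetry of $V$; your observation that the transition kernel $p(y',y)=f_V(y-y')+f_V(y+y')+f_V(2t-y-y')$ is symmetric in $(y,y')$, hence reversible with respect to the uniform law, reaches the same conclusion in one line. Your reflection-is-a-contraction argument for U1 is also a genuine proof of something the paper simply declares ``clear.'' Your unpacking of R2 --- that $\{W\cap A=W'\}$ is measurable with respect to $(Y_0,\dots,Y_M)$ and that the conditional density of $r+q$ is an average of $p^{N-M}(y,\cdot)$, so a uniform-in-$y'$ bound on $\sup_y|p^k(y',y)-1/t|$ suffices --- is more careful than the paper's closing sentence.

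The genuine gap is the one you flag yourself: the Doeblin minorization $P^K(y',\cdot)\ge\beta\,U$ is asserted (``granting this'') but never proved, and under the stated hypotheses ($f_V$ merely piecewise smooth, possibly vanishing on subintervals of $[-t_p,t_p]$) it is not a routine irreducibility remark; as written, R2 is therefore not established. Two comments on closing it. First, the key structural fact --- which is also the engine of the paper's proof --- is that the folding map $\phi:\mathbb{R}\to[0,t]$ (even, $2t$-periodic reflection) intertwines the reflected chain with the free random walk: $Y_K$ started at $y'$ has the law of $\phi(y'+v_1+\dots+v_K)$. So the minorization reduces to showing that the $K$-fold convolution $f_V^{*K}$ is bounded below by some $\beta'>0$ on an interval of length exceeding $2t$ centered at $0$ (then folding spreads that mass over all of $[0,t]$ uniformly in $y'$); this follows because $f_V\ge\eta>0$ on some $[c,d]\subset(0,t_p]$ and, by symmetry, on $[-d,-c]$, and convolving these bumps produces lower bounds on intervals around $(2m-K)(c+d)/2$ whose lengths grow linearly in $K$ and eventually overlap. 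This is doable but is real work, and you would need to write it out. Second, the paper sidesteps the minorization entirely with a Fourier argument: since the reflected chain is the folded free walk, the even $2t$-periodic extension $s_i$ of the density of $Y_i$ satisfies $c_{i,k}=c_{0,k}d(k)^i$, where $d$ is the characteristic function of $V$; absolute continuity of $V$ gives $|d(k)|<1$ for $k\ne0$ and Riemann--Lebesgue gives $\sup_{k\ne0}|d(k)|<1$, whence all nonzero modes decay geometrically and the density converges to the constant $1/t$ at a rate independent of the initial law. That route trades your explicit coupling bound for spectral information and avoids having to control the support of convolution powers; either way, the quantitative mixing step is the part of the theorem that cannot be waved through.
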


\begin{proof}
It is clear that uniformity properties U1 and U2 are satisfied, with offset $o = t / 2$.

Denote $b_i = a_i - it$, then $b_i \in [0, t]$. Moreover, $\{b_i | i = 0, 1, \dotsc\}$ is a Markov chain with transitions given by
\begin{subequations}
\begin{align}
    b_i &= b_{i - 1} + v_i, &\text{ if } b_{i - 1} + v_i \in [0, t], \label{case1}\\
    b_i &= -(b_{i - 1} + v_i), &\text{ if } b_{i - 1} + v_i < 0, \label{case2}\\
    b_i &= 2t - (b_{i - 1} + v_i), &\text{ if } b_{i - 1} + v_i > T. \label{case3}
\end{align}
\end{subequations}

Let $q_i(x)$ be the probability density function of $b_{i - 1} + v_i$, and $r_i(x)$ be the probability density function of $b_i$. We prove by induction that $b_i \sim \mathrm{Unif}([0, t])$ for any $i$. $b_0$ follows the distribution by design. Suppose $b_{i-1}$ follows it, we break the range of $b_i$ into three segments.

If $t_p \leq x \leq t - t_p$, only the case in Equation \ref{case1} can happen:
\begin{align*}
    r_i(x) &= q_i(x) \\
           &= \int_{-t_p}^{t_p} r_{i - 1}(x - v) \mathrm{p}(v_i = v) \mathrm{d}v \\
           &= \int_{-t_p}^{t_p} (1/t) \mathrm{p}(v_i = v)\mathrm{d}v \\
           &= 1/t
\end{align*}
If $0 \leq x < t_p$, only the cases in Equation \ref{case1} and Equation \ref{case2} can happen:
\begin{align*}
    r_i(x) =& q_i(x) + q_i(-x) \\
           =& \int_{-t_p}^{x} r_{i - 1}(x - v) \mathrm{p}(v_i = v) \mathrm{d}v  \\
           & + \int_{-t_p}^{-x} r_{i - 1}(-x - v) \mathrm{p}(v_i = v) \mathrm{d}v \\
           =& \int_{-t_p}^{x} r_{i - 1}(x - v) \mathrm{p}(v_i = v) \mathrm{d}v \\
           & + \int_{x}^{t_p} r_{i - 1}(v - x) \mathrm{p}(v_i = -v) \mathrm{d}v \\
           =& \int_{-t_p}^{t_p} (1/t) \mathrm{p}(v_i = v)\mathrm{d}v \\
           =& 1/t
\end{align*}
We used the fact that $\mathrm{p}(v_i = v) = \mathrm{p}(v_i = -v)$ because $V$ is symmetric about $0$. The same result for $t - t_p < x \leq t$ can be proved in a similar way. It follows that $b_i \sim \mathrm{Unif}([0, t])$, which concludes the induction.

As the sequence $A$ partitions $\mathbb{R}^+$ into intervals and each $a_i$ covers a unique interval $[it, (i+1)t]$ with uniform distribution, Property R1 is satisfied.

To prove Property R2, we will show that the Markov chain ${b_i}$ with any initial distribution will converge to the same stationary distribution with an exponential rate. First, we define $s_i \colon \mathbb{R} \to \mathbb{R}$ that has a period of $2t$, with
\begin{align*}
    s_i(x) = r_i(x), & \text{ if } x \in [0, t), \\
    s_i(x) = r_i(-x), & \text{ if } x \in [-t, 0)
\end{align*}
And we look at the Fourier series of $s_i$ with period $2t$:
\begin{equation*}
    s_i(x) = \sum_{k = -\infty}^{\infty} c_{i,k}\exp (\mathrm{j}\pi kx/t)
\end{equation*}
and the Fourier transform of $\mathrm{p}(v_i = x)$:
\begin{equation*}
    \mathrm{p}(v_i = x) = \frac{1}{2t}\int_{-\infty}^{\infty} d(\omega) \exp (\mathrm{j}\pi \omega x/t) \mathrm{d}\omega 
\end{equation*}
It is easy to see that $s_i$ has a nice behavior under `jittering with reflection'. Namely, $s_i(x)$ is the convolution of $s_{i-1}(x)$ and $\mathrm{p}(v_i = x)$, or
\begin{equation*}
    c_{i,k} = c_{i-1,k}d(k)
\end{equation*}
By induction,
\begin{equation*}
    c_{i,k} = c_{0,k}d(k)^i
\end{equation*}
Since
\begin{equation*}
    d(\omega) = \int_{-\infty}^{\infty} \mathrm{p}(v_i = x) \exp (-\mathrm{j}\pi \omega x/t) \mathrm{d}x
\end{equation*}
we see that $|d(k)| \leq 1$, with the equality holds only if $k = 0$. Also, by Riemann–Lebesgue lemma, we have $|d(k)| \to 0$ as $k \to \infty$. Therefore, when $i \to \infty$, only the zero frequency coefficient $c_{i,0}$ will survive, and $s_i(x)$ becomes a constant function regardless of $s_0(x)$. Also, the convergence has bounded exponential rate. This asymptotic independence implies that Property R2 is satisfied.

\end{proof}

\section{The discrete version}
\label{sec:discrete}

In this section, we are going to formulate the discrete version of this problem, which is relevant to real products. For example, MediaPipe \cite{lugaresi2019mediapipe} is an open sourced framework for   building multi-modal (\eg video, audio) applied ML pipelines, and its timestamps are at $1$ microsecond granularity.

\subsection{Uniformity and randomness properties}
For a given interval $t \in \mathbb{N}$ and a perturbation threshold $t_p \in \{1, 2, ..., t - 1\}$, we would like to probabilistically sample an infinite sequence $A = \{a_0, a_1, a_2, \dotsc \}$ in $\mathbb{N}$ that satisfies both the uniformity properties and randomness properties as defined below.

\begin{itemize}
    \item[U1.] $|a_{i+1} - a_i -t | \leq t_p$ for any $i \geq 0$.
    \item[U2.] There exists some offset $o \in R$ such that $|a_i - (it + o)| \leq t/2$ for any $i \geq 0$.
    \item[R1.] There exists some threshold $\delta > 0$ such that $\mathrm{P}(r \in A) > \delta$ for any $r \in \mathbb{N}$.
    \item[R2.] The event $r + q \in A$ becomes independent of all events $s \in A$ for $s < r$ as $q \to \infty$. Formally, for any $\epsilon > 0$, there exists an integer $q$ such that $|\mathrm{P}(r + q \in A) - \mathrm{P}(r + q \in A | W \cap A = W')| < \epsilon$ for any integer $r > 0$, $W \subset \{0, 1, \dotsc{}, r-1\}$, and $W' \subset W$.
\end{itemize}

\subsection{Security Statement}
\begin{theorem}
Let $S$ be an infinite subset of $\mathbb{N}$.  Let $A = \{a_0, a_1, a_2, \dotsc \}$ be a sampled sequence in $\mathbb{N}$ that satisfies the uniformity and randomness properties. Then the probability of $S \cap \{0, 1, 2,...u-1\}$ and $A$ being disjoint goes to $0$ exponentially with the cardinality of $S \cap \{0, 1, 2,...u-1\}$ as $u$ goes to $\infty$.
\end{theorem}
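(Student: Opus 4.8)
The plan is to transcribe the argument for the continuous version, replacing the measure $\mu$ by cardinality and integrals by finite sums; the only genuinely new point is a boundary-alignment issue caused by discreteness. First, from Property R2 fix an integer $n$ large enough that $|\mathrm{P}(r+(n-1)t\in A) - \mathrm{P}(r+(n-1)t\in A\mid W\cap A=W')| < \delta/2$ for all integers $r>0$, all $W\subseteq\{0,\dots,r-1\}$ and all $W'\subseteq W$, where $\delta$ is the threshold from Property R1; hence, as in the discrete analogue of Equation~\ref{probabilitythreshold}, $\mathrm{P}(r+(n-1)t\in A\mid W\cap A=W') > \delta/2$ under the same conditions. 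Next, using the offset $o$ from U2, choose $o'\equiv o\pmod t$ with $o'\le t/2$ and partition $\mathbb{N}$ into the length-$t$ blocks $I_{i,k}=\{\,x\in\mathbb{Z}:(i+kn-1/2)t+o'\le x<(i+kn+1/2)t+o'\,\}$, grouped into the $n$ classes $I_i=\bigcup_k I_{i,k}$, so that (writing $I_{i,k}^u=I_{i,k}\cap\{0,\dots,u-1\}$, and $I_i^u$ similarly) $I_0^u,\dots,I_{n-1}^u$ partition $\{0,\dots,u-1\}$, while any two points lying in distinct blocks of the same class differ by more than $(n-1)t$, since $n-1$ whole blocks --- of total size $(n-1)t$ --- separate them. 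Finally set $j_u=\argmax_i|S\cap I_i^u|$, so that $|S\cap I_{j_u}^u|\ge|S\cap\{0,\dots,u-1\}|/n$.

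The fact powering the bound, from U2, is that the block containing $a_m$ is determined by the index $m$, so each block meets at most one term of $A$; thus the events $\{x\in A\}$ with $x$ ranging over a single block are mutually exclusive. Hence by R1, $\mathrm{P}(S\cap I_{j_u,0}^u\cap A=\emptyset)=1-\sum_{x\in S\cap I_{j_u,0}^u}\mathrm{P}(x\in A)\le 1-\delta\,|S\cap I_{j_u,0}^u|$, and for $k\ge 1$, conditioning on the event $\mathcal{E}_k$ that $S\cap I_{j_u,k'}^u\cap A=\emptyset$ for all $k'<k$ --- an event determined by the restriction of $A$ to a point set lying more than $(n-1)t$ below every point of $I_{j_u,k}^u$, so the above consequence of R2 applies pointwise --- gives $\mathrm{P}(S\cap I_{j_u,k}^u\cap A=\emptyset\mid\mathcal{E}_k)\le 1-\tfrac12\delta\,|S\cap I_{j_u,k}^u|$. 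Multiplying these conditional probabilities over all $k$ with $I_{j_u,k}^u\ne\emptyset$, using $\mathrm{P}(S\cap\{0,\dots,u-1\}\cap A=\emptyset)\le\mathrm{P}(\forall k:\ S\cap I_{j_u,k}^u\cap A=\emptyset)$ and $1-x\le e^{-x}$, gives
\[
\mathrm{P}\big(S\cap\{0,\dots,u-1\}\cap A=\emptyset\big)\ \le\ \exp\!\Big(-\frac{\delta}{2}\,|S\cap I_{j_u}^u|\Big)\ \le\ \exp\!\Big(-\frac{\delta}{2n}\,|S\cap\{0,\dots,u-1\}|\Big);
\]
since $S$ is infinite, $|S\cap\{0,\dots,u-1\}|\to\infty$ as $u\to\infty$, which is the claimed exponential decay.

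The step I expect to need the most care --- the only one not already covered by the continuous proof --- is the claim that each block meets at most one term of $A$: over $\mathbb{R}^+$ the bad configurations (a sample sitting exactly on a block boundary) have probability $0$, but over $\mathbb{N}$ they need not. The clean remedy is to pick $o'$ with $t/2+o'\notin\mathbb{Z}$, which is possible (together with $o'\equiv o\pmod t$ and $o'\le t/2$) whenever the interval of admissible U2-offsets is non-degenerate, the generic case: then no block endpoint is an integer and each block meets exactly one term of $A$. In the residual degenerate case one checks directly from U1 and U2 that a length-$t$ block still meets at most \emph{two} terms of $A$, and the elementary inequality $\mathrm{P}(N\ge 1)\ge\tfrac12\mathbb{E}[N]$ for integer-valued $0\le N\le 2$ reinstates all the estimates above with $\delta$ halved (and $\delta/2$ quartered), which does not affect the conclusion. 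Everything else is the continuous argument verbatim with $\sum$ in place of $\int$.
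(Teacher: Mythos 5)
Your proof is correct and follows exactly the route the paper intends: the paper omits this proof, stating only that it is ``similar to the continuous version,'' and your transcription (sums for integrals, cardinality for measure, the same $n$-class block partition and chained conditioning via R1/R2) is precisely that argument. Your added observation that a length-$t$ block can capture two consecutive terms of $A$ when the block boundaries are integers --- a genuine discreteness issue with no continuous analogue, since there such configurations have probability zero --- and your two remedies (shifting $o'$ so no boundary is an integer when the admissible offset interval permits, otherwise using $\mathrm{P}(N\ge 1)\ge\tfrac12\mathbb{E}[N]$ for an integer count $0\le N\le 2$, at the cost of a constant in the exponent) are both valid and patch a subtlety the paper glosses over.
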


The proof will be similar to the continuous version and is omitted.

\subsection{Jittering with reflection}
\begin{algorithm}[h]
\caption{Jittering with reflection (discrete version)}
\begin{algorithmic}
\label{alg:discrete}
\REQUIRE A jittering distribution $V$ on $\{-t_p, -t_p + 1, \dotsc{}, t_p - 1, t_p\}$ which is symmetric about $0$. The gcd of $2t$ and the indices of the nonzero entries of $V$ needs to be $1$. For example $V = \mathrm{Unif}(\{-t_p, -t_p + 1, \dotsc{}, t_p - 1, t_p\})$. 
\ENSURE A sampled sequence $\{a_0, a_1, a_2, \dotsc\}$.
\STATE Sample $a_0 \sim \mathrm{Unif}(\{0, 1, \dotsc, t - 1\})$.
\FOR{$i = 1, 2, ... $}
\STATE Sample $v_i \sim V$, set $b_i = a_{i - 1} + t + v_i$.
\IF{$b_i \geq{} (i + 1)t$}
\STATE Set $a_i = 2(i + 1)t - b_i - 1$.
\ELSIF{$b_i < it$}
\STATE Set $a_i = 2it - b_i - 1$.
\ELSE
\STATE Set $a_i = b_i$.
\ENDIF
\ENDFOR
\end{algorithmic}
\end{algorithm}

The discrete `jittering with reflection' sampling is stated in Algorithm \ref{alg:discrete}. 
\begin{theorem}
  The sampled sequence from Algorithm \ref{alg:discrete} satisfies the uniformity and randomness properties.
\end{theorem}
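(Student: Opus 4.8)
The plan is to reuse the proof of the continuous version almost verbatim, tracking two differences: the reflection formulas in Algorithm~\ref{alg:discrete} now carry the extra ``$-1$'' that keeps $a_i$ inside the half-open block $\{it,\dots,(i+1)t-1\}$, and the Fourier argument runs over the $2t$ characters of $\mathbb{Z}/2t\mathbb{Z}$ rather than over $\mathbb{Z}$. Uniformity is immediate: the algorithm always returns $a_i\in\{it,\dots,(i+1)t-1\}$, so U2 holds with offset $o=t/2$; for U1 one checks the three branches separately, noting that in each reflecting branch the branch test forces $b_i:=a_i-it$ to lie within $t_p$ of the relevant endpoint, whence a one-line computation puts $a_{i+1}-a_i-t$ in $\{1-t_p,\dots,t_p-1\}$.

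For the randomness properties, set $b_i=a_i-it\in\{0,\dots,t-1\}$ as in the continuous proof. Translating the reflection rules, $\{b_i\}$ is a Markov chain that updates $b_i=b_{i-1}+v_i$, or $b_i=-1-(b_{i-1}+v_i)$, or $b_i=2t-1-(b_{i-1}+v_i)$, according to whether $b_{i-1}+v_i$ lands in $\{0,\dots,t-1\}$, below it, or above it (only one reflection can occur since $|v_i|\le t_p<t$); this is the discrete method of images about the half-integer centres $-1/2$ and $t-1/2$. Exactly as in the continuous case, the law $r_i$ of $b_i$ equals the restriction to $\{0,\dots,t-1\}$ of the cyclic convolution of the law of $V$ with $s_{i-1}$, the $2t$-periodic extension of $r_{i-1}$ that is symmetric about $-1/2$; the symmetry of $V$ is exactly what makes this identity go through. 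Starting from $r_0=\mathrm{Unif}(\{0,\dots,t-1\})$ and running the same induction, $b_i\sim\mathrm{Unif}(\{0,\dots,t-1\})$ for every $i$, and since $r\in A$ iff $b_{\lfloor r/t\rfloor}=r-t\lfloor r/t\rfloor$ this gives $\mathrm{P}(r\in A)=1/t$, hence Property~R1 with any $\delta<1/t$.

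Property~R2 again follows from exponential mixing. Expanding $s_i$ over the characters $x\mapsto\exp(\mathrm{j}\pi kx/t)$, $k\in\{0,\dots,2t-1\}$, the convolution identity gives $\hat s_i(k)=\hat s_0(k)\,d(k)^i$ with $d(k)=\sum_v\mathrm{P}(V=v)\exp(\mathrm{j}\pi kv/t)$ real and even in $k$; moreover the symmetry-about-$(-1/2)$ constraint forces $\hat s_i(t)=0$ for all $i$, so the $k\equiv t$ frequency never appears and only the modes $k\in\{1,\dots,t-1\}$ (plus the constant mode $d(0)=1$) survive. The crux is $|d(k)|<1$ for each such $k$: if $|d(k)|=1$ then all $\exp(\mathrm{j}\pi kv/t)$ over $v\in\mathrm{supp}(V)$ coincide, their common value is $\pm1$ by symmetry of $V$, so $t\mid kv$ for every such $v$, and a short gcd manipulation shows $t/\gcd(t,k)$ divides $\gcd(2t,\mathrm{supp}(V))=1$, forcing $t\mid k$, impossible for $1\le k\le t-1$. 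Hence the spectral gap $\rho:=\max_{1\le k\le t-1}|d(k)|<1$ yields convergence of $\{b_i\}$ to the uniform law from any initial distribution at rate $\rho^i$; since conditioning on $W\cap A=W'$ with $W\subset\{0,\dots,r-1\}$ pins down the chain only up to time $\lceil r/t\rceil$, running it $\Theta(q/t)$ further steps gives $\mathrm{P}(r+q\in A\mid W\cap A=W')\to1/t$ uniformly in $r,W,W'$ as $q\to\infty$ --- with the same care for $W$ straddling several blocks as in the continuous security proof --- which is R2.

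The routine-but-error-prone part is the discrete bookkeeping: getting every ``$-1$'' right in U1, in the Markov transitions, and in the symmetric periodic extension, and confirming that symmetry of $V$ is precisely what validates the reflection-convolution identity. The one genuinely new ingredient, and the step I expect to demand the most care, is the gcd argument above: showing that $\gcd(2t,\mathrm{supp}(V))=1$ is exactly what rules out a modulus-one eigenvalue at every nonzero frequency that actually occurs, together with the observation that the $k\equiv t$ frequency --- which can carry a modulus-one eigenvalue (e.g.\ $d(t)=-1$ when $V$ is supported on odd integers) and would defeat naive mixing --- is harmless because the half-integer-centred reflection annihilates that mode.
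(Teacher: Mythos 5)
Your proof is correct and follows the same route the paper intends: the paper omits the argument entirely, saying only that it is ``analogous to the continuous version'' and that the gcd condition makes the non-constant DFT coefficients of $V$ have modulus less than one. Your filled-in version is actually more careful than the paper's sketch on one point: as you observe, the frequency $k=t$ can carry a modulus-one eigenvalue $d(t)=-1$ even under the gcd hypothesis (e.g.\ $t=2$, $V=\mathrm{Unif}(\{-1,1\})$), so the paper's stated reason is not literally true as written; the mixing argument survives only because the half-integer-centred symmetry of the extension $s_i$ forces $\hat s_i(t)=0$, exactly as you argue. Your gcd manipulation ruling out modulus-one eigenvalues at $1\le k\le t-1$, and your bookkeeping of the extra ``$-1$'' in the reflections (reflection about $-1/2$ and $t-1/2$), are both correct and consistent with the worked $t=2$ example in Section~\ref{sec:example}.
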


The proof will be analogous to that of the continuous version. We can show that $a_i \sim \mathrm{Unif}(\{it, it +1, \dotsc, (i + 1)t - 1\})$ for any $i$. The additional condition $\gcd (\{2t\} \cup \{k | \mathrm{P}(v = k) > 0\}) = 1$ ensures that the Markov chain $\{a_i - it|i=0,1,\dotsc\}$ will converge to $\mathrm{Unif}(\{0, 1, 2, \dotsc, t - 1\})$ for any initial distribution. The reason is that, in the discrete Fourier transform of $\mathrm{P}(v = k)$, the coefficients of all but the constant term are less than $1$.

\section{An example: video classification after frame sampling}
\label{sec:example}
In this section, we demonstrate a simple example of the discrete frame sampling with interval $T=2$ and presents its impact on video classification.

Let $x_i\equiv a_i-2i$, then $\{x_i|i=0,1,...\}$ is a Markov chain with only two possible states, $0$ and $1$. At each step, the jittering flips the state with a probability of $\alpha$. The transition matrix $P$, where $P_{ij}$ represents the probability of moving from state $i$ to state $j$, is given by
\begin{equation*}
    P=
    \begin{vmatrix}
        1-\alpha&\alpha\\
        \alpha&1-\alpha\\
    \end{vmatrix}
\end{equation*}

It can be derived that the correlation between $x_i$ and $x_{i+m}$ is
\begin{equation*}
  \mathrm{corr}(x_i, x_{i+m}) = (1-2\alpha)^m
\end{equation*}
It shows that the correlation drops to zero at an exponential speed, which is consistent with the randomness properties. We define the correlation length $l_c$ as the number of steps over which the correlation drops to $1/\mathrm{e}$, \ie,
\begin{equation*}
  l_c = -\frac{1}{\log(1-2\alpha)}
\end{equation*}
A larger $\alpha$ gives a shorter correlation length and therefore more robustness. The red curve in Figure \ref{fig:experiment} visualizes this relation.

\begin{figure}[h]
\begin{center}
\includegraphics[width=0.9\linewidth]{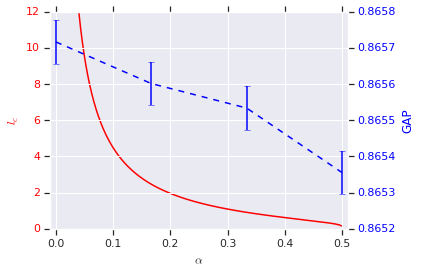}
\end{center}
   \caption{This figure presents some numerical results of the discrete `jittering with reflection' sampling with $T=2$. $\alpha$ in the horizontal axis is a measure of the jittering magnitude, and the variance of the distance between two neighboring frames is proportional to it. The red line shows the correlation length of sampled frames decreases with $\alpha$. The blue line shows that the performance (as measured in GAP) of a Youtube-8M classification model slightly decreases with $\alpha$.}
\label{fig:experiment}
\end{figure}

It can also be derived that the variance of $\delta a_i = a_{i+1}-a_i$ is
\begin{equation*}
  \mathrm{Var}(\delta a_i) = \alpha
\end{equation*}
This means a larger $\alpha$ indicates further departure from uniformity. Sampling frames at irregular intervals may have many undesired consequences. Nevertheless, we use the following experiment to demonstrate that its impact on video level classification tasks is small.

YouTube-8M \cite{abu2016youtube} is a large-scale labeled video dataset that consists of features from millions of YouTube videos with high-quality machine-generated annotations. We use the 2018 version which has about 6 million videos with a diverse vocabulary of
3862 audio-visual entities. 1024-dimensional visual features and 128-dimensional audio features at 1 frame per second are extracted from bottleneck layers of pre-trained deep neural networks and are provided as input features for this dataset.

We train a deep-bag-of-frames (DBoF) model as described in Li \etal \cite{li2019ensemblenet}. In the DBoF, a few layers (shared across frames) are applied to each frame, and then the frame level features are aggregated into a video feature, and finally a few additional layers are applied to obtain predictions. For evaluation, we only take frames at multiples of 5 seconds just to magnify the effect of sampling. On top of these frames, we use the discrete `jittering with reflection' with $T=2$ (which corresponds to 10 seconds in the videos) to sample frames before applying the DBoF model. The blue curve in Figure \ref{fig:experiment} shows that the global average precision (GAP) of the DBoF model slightly degrades when $\alpha$ increases, which is expected. The variation is small because topical annotations are insensitive to locations of the frames.

One can adjust this $\alpha$ to achieve the desired trade-off between robustness (as measured by $l_c$) and uniformity (as measured by $\mathrm{Var}(\delta a_i)$).

\section{Conclusions}
\label{sec:conclusions}
In this paper, we formulated the uniformity and randomness properties that a general frame sampling strategy desires. We proved that if these properties are satisfied, a strategy is robust in the sense that any recurring sequence of frames has exponentially small chance of concealing itself. We designed an algorithm `jittering with reflection' that satisfies all the desired properties, and the magnitude of the jittering can be tuned to achieve the desired trade-off between the amount of irregularities and the degree of robustness. We expect this algorithm to be widely useful in video analysis.

\section*{Acknowledgements}
We thank Qingchun Ren for assistance in formalizing many proofs in this paper.


\end{document}